\newcommand{\R}[0] {\rm I\!R}
\title[The Geometric Occam's razor Implicit in Deep Learning]{The Geometric Occam's Razor Implicit in Deep Learning}
\thanks{equal contribution} \Email{dherin@google.com}\\
\begin{document}

\maketitle

\begin{abstract}%
In over-parameterized deep neural networks there can be many possible parameter configurations that fit the training data exactly. However, the properties of these interpolating solutions are poorly understood. We argue that over-parameterized neural networks trained with stochastic gradient descent are subject to a Geometric Occam's Razor; that is, these networks are implicitly regularized by the geometric model complexity. For one-dimensional regression, the geometric model complexity is simply given by the arc length of the function. For higher-dimensional settings, the geometric model complexity depends on the Dirichlet energy of the function. We explore the relationship between this Geometric Occam's Razor, the Dirichlet energy and other known forms of implicit regularization. Finally, for ResNets trained on CIFAR-10, we observe that Dirichlet energy measurements are consistent with the action of this implicit Geometric Occam's Razor.

\end{abstract}

\section{Introduction}\label{section:intro}\label{section:motivation}

Naively, we might expect that over-parameterized models will overfit the training data and that under-parameterized models will be better since they have fewer degrees of freedom. However, it turns out that over-parameterized models can find better solutions than the under-parameterized models - a  paradoxical phenomenon known as the double-descent curve \cite{belkin2021fear,Nakkiran2020Deep}. One possible explanation for this behaviour is that over-parameterized models are subject to an Occam's razor that filters out unnecessarily complex solutions in favour of simpler solutions. 

Typically we might expect an Occam's razor to take the form of a complexity measure on the number of model parameters or the size of the hypothesis space, for instance \cite{generalization}. However, for neural networks, the precise form of this hypothesized Occam's razor is not known, since it is not explicitly enforced during training. There has been some progress recently to identify sources of implicit regularization that may play a role here \cite{barrett2021implicit, smith2021on,chao2021sobolev,blanc2020}. For instance, recent work has exposed a hidden from of regularization in Stochastic Gradient Descent (SGD) called Implicit Gradient Regularization (IGR) \cite{barrett2021implicit, smith2021on} which penalizes learning trajectories that have large loss gradients. 

For over-parameterized neural networks trained with SGD, we hypothesize that the hidden Occam's razor takes the form of a geometric complexity measure. Our key contributions are as follows: (1) define this notion of geometric complexity; (2) show that the Dirichlet energy can be used as a proxy for geometric complexity; (3) show that the IGR mechanism from SGD puts a regularization pressure on the geometric complexity; and (4) show that the strength of this pressure increases with the size of the learning rate, which we verify with numerical experiments. 

\section{The Geometric Occam's razor in 1-dimensional regression}

 To build intuition, we begin with a simple 1-dimensional example. Consider a ReLU neural network consisting of 3 layers with 300 units per layer, trained using SGD, without any form of explicit regularization to perform 1-dimensional regression using only 10 data points. In this extreme setting, we should expect the network to overfit the dataset, since the function space described by that neural network is extremely large - consisting of piecewise linear functions with thousands of linear pieces \citep{arora2018understanding}.  Yet, if we plot the learned function during training from the first step all the way up to interpolation, as in Figure \ref{fig:training movie}, we observe that the learned function is the `simplest' possible function, in some sense, among all functions with the same training error. 

But what do we mean by `simple'? Our key intuition in this example is that the arc length of the learned function over the smallest interval containing the data points provides our measure of model complexity. At the end of training, we see that the arc length of the learned function is close to that of the shortest possible path interpolating between the data points, suggesting that this measure of geometric complexity is somehow optimised during training.

\begin{figure}[t]
  \centering
  \includegraphics[width=1\linewidth]{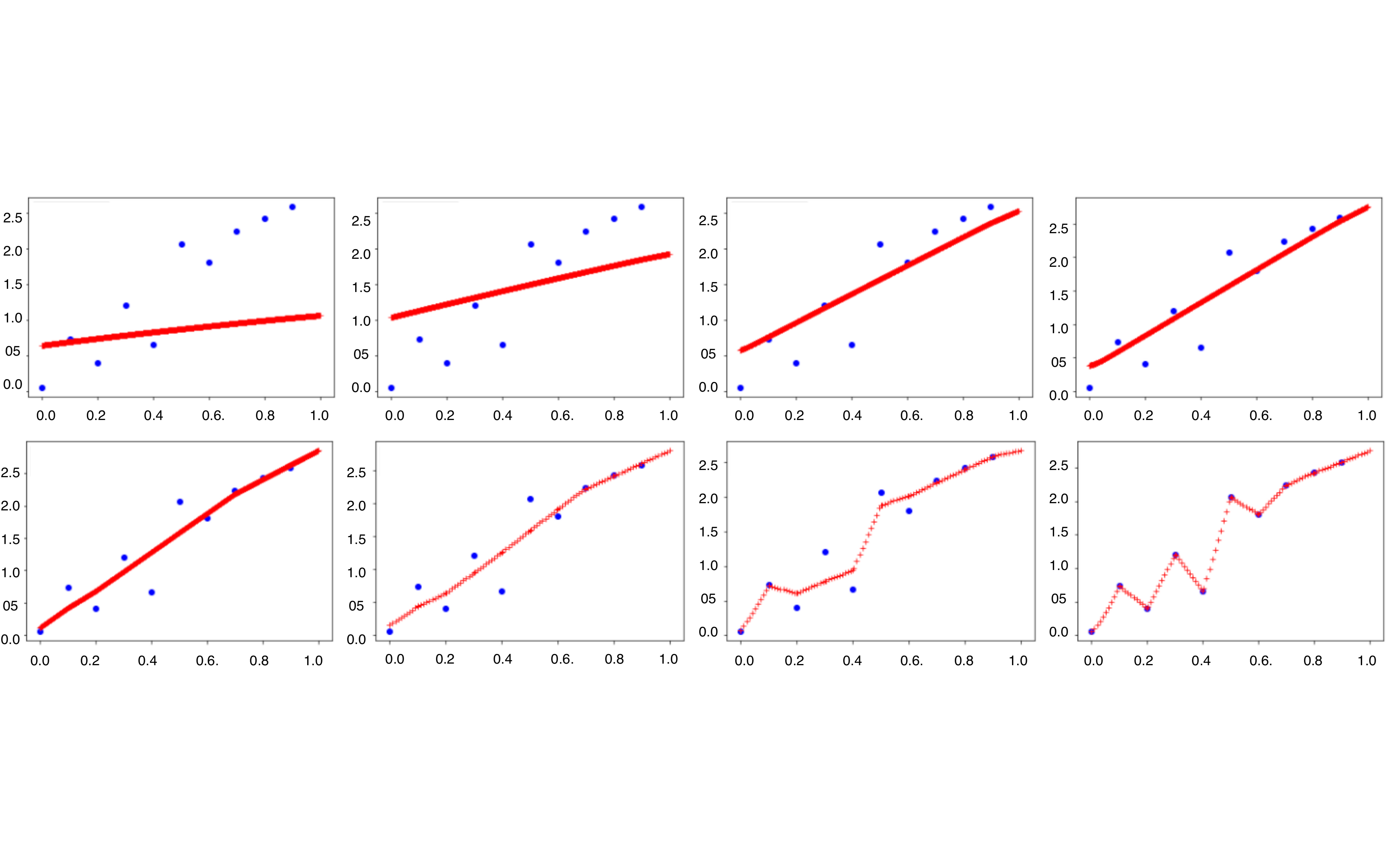}
  \vspace{-2.25cm}
  \caption{Training sequence for an over-parameterised neural network trained using 10 data points plotted at iteration step 1, 20, 40, 50, 5000, 10000, 20000, and 30000 (from top left to bottom right, respectively). The blue circles represent the data points while the red crosses represent the points predicted by the neural network.}
  \label{fig:training movie}
\end{figure}

\section{Dirichlet energy as a measure of function complexity}\label{section:dirichlet_energy}

In the previous section, we used the arc length of the learnt 1-dimensional function as a measure of its geometric complexity. What is the corresponding notion for a function in a high-dimensional feature space $f:\R^d \rightarrow \R$? In this case, we can define the {\it geometric complexity} of a function $f$ as the volume $ \Omega_D(f)$ of its graph
\begin{equation}\label{equation:graph}
\textrm{gr}(f_{X_D}) := \{(x, f(x)):\: x\in X_D\} \subset \R^d \times \R
\end{equation}
restricted to the feature polytope $X_D$; that is, the polytope with smallest volume containing all the feature points $x_i$ of the dataset $D = \{(x_i, y_i): i=1,\dots, n\}$.
From differential geometry \citep{differential1976carmo}, for a smooth function $f:\R^d \rightarrow \R$, the graph $\textrm{gr}(f_{X_D})$ is an $n$-dimensional smooth submanifold of $\R^{n+1}$. Using the Riemannian metric on $\textrm{gr}(f_{X_D})$ induced from the Euclidean metric on $\R^{n+1}$ and its corresponding Riemannian volume form, the  volume of the graph of $f$ can be expressed as
\begin{equation}\label{equation:volume}
\Omega_{D}(f) = \int_{X_D} \sqrt{1 + \| \nabla_x f \|^2} ~dx.
\end{equation}
This can in turn be approximated using a first-order Taylor series expansion $\sqrt{1 + z} \simeq 1 + \frac 12 z$ so that
\begin{equation}\label{equation:approx}
\Omega_{D}(f) \approx \int_{X_D} 1 + \frac 12 \| \nabla_x f \|^2 ~dx = 
\textrm{Vol}(X_D) + \textrm{DE}(f),
\end{equation}
where 
\begin{equation}\label{equation:DE}
\textrm{DE}(f) = \frac{1}{2} \int_{X_D} \|\nabla_x f\|^2 dx.
\end{equation}
is the {\it Dirichlet energy } of the function $f$ over $X_D$.
The computation above suggests that both the function volume or its Dirichlet energy can be used as a measure of a function's geometric complexity. 

One way to compute the Dirichlet energy numerically is to use a quadrature formula summing up $\|\nabla_x f\|^2$ over a number of points in $X_D$ and multiplying the summands by the volume element of the point. So, if we use the data points themselves for evaluation and $1/|D|$ as a proxy for the volume element, we obtain a discrete version of the Dirichlet energy which we call the {\it discrete Dirichlet energy}, denoted by $\widehat{\textrm{DE}}(f)$. This provides an easily computable measure of a function's geometric complexity:
\begin{equation}\label{equation:de}
\widehat{\textrm{DE}}(f) = \frac 1{2|D|} \sum_{x\in D}\|\nabla_x f(x)\|^2.
\end{equation}

\section{How neural networks tame model complexity}\label{section:implicit_regularization}

We now argue that the geometric complexity, as measured by the discrete Dirichlet energy, is implicitly regularized during the training of neural nets with vanilla SGD. 

In recent work \cite{smith2021on}, it was shown that the discrete steps of SGD from epoch to epoch closely follow, on average, the gradient flow of a modified loss of the form:
$$
\tilde L = L + 
\frac h{4|D|}\sum_{(x,y)\in D} \|\nabla_\theta L(x, \theta)\|^2,
$$
where $L$ is the original loss and $L(x, \theta) := E(f_\theta(x), y)$ is the error $E$ between the prediction $f_\theta(x)$ and the true label $y$. This means that during SGD the quantities $\|\nabla_\theta L(x, \theta)\|^2$  at each data point $(x, y)\in D$, are implicitly regularized, with the learning rate $h$ acting as an implicit regularization rate. 

Now, for models whose losses come from the application of a maximum likelihood estimation on a conditional probability distribution in the exponential family such as the least-square loss or the cross-entropy loss, we obtain loss gradients that have the following form:
$$
\nabla_\theta L(x, \theta) = \epsilon_x(\theta) \nabla_\theta f_\theta(x),
$$
where $\epsilon_x(\theta) = (f_\theta(x) - y)$ is the signed residual, yielding
\begin{equation}\label{equation:resdidual_loss}
\tilde L = L + 
\frac h{4|D|}\sum_{(x,y)\in D}\epsilon_x(\theta)^2\|\nabla_\theta f_\theta(x)\|^2.
\end{equation}
From that last expression for $\tilde L$, we see that the terms $\|\nabla_\theta f_\theta(x)\|^2$ are implicitly regularized at each data point $(x,y)$ and even more so in the region where the residual errors are large, such as the beginning of training.

We now argue that for {\it neural networks}, in particular, the regularization pressure on the gradient of the network with respect to the parameters $\|\nabla_\theta f_\theta(x)\|^2$ acts as a regularization pressure on the gradient of the network with respect to the input $\|\nabla_x f_\theta(x)\|^2$. Hence, this creates a pressure for the Dirichlet energy to be implicitly regularized during training. In fact, this follows from the fact that for neural networks their derivatives with respect to the inputs and the parameters can be related as follows (proof in Appendix \ref{appendix:proof}):

\begin{theorem}
Consider a neural network with $l$ layers
$
f_\theta(x) = f_1\circ \dots \circ f_l(x),
$
where $f_i(z) = a_i(w_i z + b_i)$ with $\theta = (w_1, b_1, \dots, w_l, b_l)$ being the vector of layer weight matrices $w_i$ and biases $b_i$ and the $a_i$'s are the layer activation functions. Then we have that
\begin{equation}\label{equation:gradient_relation}
    \|\nabla_x f_\theta(x)\|^2 \left(
        \frac{1 + \|h_1(x)\|^2}{\|w_1\|^2\|h_1'(x)\|^2}
            + \cdots + 
        \frac{1 + \|h_l(x)\|^2}{\|w_l\|^2\|h_l'(x)\|^2}
    \right) \leq \|\nabla_{\theta}f(x)\|^2,
\end{equation}
where $h_i(x)$ is the sub-network from input $x$ to layer $i$ and $\|w_i\|$ is the spectral norm of the weight matrix $w_i$.
\end{theorem}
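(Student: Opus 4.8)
\medskip
\noindent\textbf{Proof proposal.} The plan is to expand both sides with the chain rule, recognise that a single ``backpropagated'' covector appears simultaneously in the input‑gradient and in the parameter‑gradient of each layer, eliminate it, and then sum the resulting per‑layer comparisons. Throughout I will use the convention that $h_i$ is the sub‑network sending the input $x$ to the \emph{input} of layer $i$ (with the paper's composition order this is $h_i=f_{i+1}\circ\cdots\circ f_l$, with $h_l=\mathrm{id}$), that $\|w_i\|$ and $\|h_i'(x)\|$ denote operator norms while $\|\nabla_\theta f_\theta(x)\|$ and $\|\nabla_x f_\theta(x)\|$ denote Euclidean/Frobenius norms, and that the activations are differentiable at the relevant points so the chain rule applies.

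First I would fix a layer $i$ and write $f_\theta(x)=\phi_i\bigl(w_i h_i(x)+b_i\bigr)$, where $\phi_i$ is the scalar‑valued map realised by the part of the network between the pre‑activation $s_i:=w_i h_i(x)+b_i$ and the output. Setting $G_i:=\nabla\phi_i(s_i)=\nabla_{s_i}f_\theta(x)$ (a row covector), and using that $s_i$ depends on $(w_i,b_i)$ directly and on $x$ only through $h_i$, the chain rule gives the three identities $\nabla_{b_i}f_\theta(x)=G_i$, $\nabla_{w_i}f_\theta(x)=G_i^{\!\top}h_i(x)^{\!\top}$ (an outer product), and $\nabla_x f_\theta(x)=G_i\,w_i\,h_i'(x)$. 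The first two are exact and yield
\begin{equation}
\|\nabla_{w_i}f_\theta(x)\|^2+\|\nabla_{b_i}f_\theta(x)\|^2=\|G_i\|^2\bigl(1+\|h_i(x)\|^2\bigr),
\end{equation}
which is precisely where the numerators $1+\|h_i(x)\|^2$ come from. The third identity, together with submultiplicativity of the operator norm ($\|u^{\!\top}A\|\le\|u\|\,\|A\|$ applied twice), gives $\|\nabla_x f_\theta(x)\|\le\|G_i\|\,\|w_i\|\,\|h_i'(x)\|$, hence $\|G_i\|^2\ge\|\nabla_x f_\theta(x)\|^2/(\|w_i\|^2\,\|h_i'(x)\|^2)$.

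Combining these two facts for each $i$ and summing over $i=1,\dots,l$ gives
\begin{equation}
\|\nabla_\theta f_\theta(x)\|^2=\sum_{i=1}^l\Bigl(\|\nabla_{w_i}f_\theta(x)\|^2+\|\nabla_{b_i}f_\theta(x)\|^2\Bigr)\ \ge\ \|\nabla_x f_\theta(x)\|^2\sum_{i=1}^l\frac{1+\|h_i(x)\|^2}{\|w_i\|^2\,\|h_i'(x)\|^2},
\end{equation}
which is the claimed inequality. The only place where information is genuinely discarded is the two submultiplicativity estimates used to bound $\|\nabla_x f_\theta(x)\|$; all the other steps are equalities. Consequently the ``hard part'' here is not analytic but bookkeeping: pinning down the composition/indexing convention for $h_i$ so the formula lines up, verifying that the same covector $G_i$ literally appears in all three chain‑rule expansions (so it can be cancelled), and noting the harmless nondegeneracy assumption $\|w_i\|\,\|h_i'(x)\|\neq 0$ needed to divide.
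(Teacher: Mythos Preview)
Your argument is correct and reaches exactly the same per-layer inequalities as the paper, but the route is more direct. The paper proves two separate lemmas by an explicit \emph{perturbation-matching} device (inspired by \cite{SeongLKHK18}): for each layer $i$ it constructs a parameter perturbation $u(\delta x)$ of $w_i$ (respectively $b_i$) so that $f_{w_i+u(\delta x)}(x)=f_{w_i}(x+\delta x)$ to first order, differentiates both sides at $\delta x=0$, and then takes norms. After dust settles this produces precisely your identities $\nabla_x f_\theta(x)=\nabla_{b_i}f_\theta(x)\,w_i\,h_i'(x)$ and $\nabla_x f_\theta(x)=\nabla_{w_i}f_\theta(x)\,\dfrac{(w_i h_i'(x))h_i(x)^{\!\top}}{\|h_i(x)\|^2}$, from which the two bounds $\|\nabla_x f\|^2\le\|w_i\|^2\|h_i'(x)\|^2\|\nabla_{b_i}f\|^2$ and $\|\nabla_x f\|^2\,\|h_i(x)\|^2\le\|w_i\|^2\|h_i'(x)\|^2\|\nabla_{w_i}f\|^2$ follow; the final step is the same Pythagorean sum you use. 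Your version short-circuits the perturbation construction by introducing the backpropagated covector $G_i=\nabla_{s_i}f_\theta(x)$ and reading off all three derivatives from a single chain-rule expansion, which makes the source of the factor $1+\|h_i(x)\|^2$ (the Frobenius norm of the rank-one map $(G_i,h_i(x))\mapsto G_i^{\!\top}h_i(x)^{\!\top}$ together with the bias term) completely transparent and isolates the single lossy step (submultiplicativity for $\|G_i w_i h_i'(x)\|$). The paper's formulation, on the other hand, makes the conceptual point that input sensitivity can literally be simulated by parameter sensitivity at any layer, which is the intuition driving the whole section. Either derivation is fine; yours is shorter, theirs carries more narrative.
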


From \eqref{equation:gradient_relation}, we see that the regularization pressure from IGR translates into a regularization pressure on the discrete Dirichlet energy when the positive quantities 
\begin{equation}\label{equation:Ai}
A_i(\theta, x) := \|w_i\|\|h_i'(x)\|, \quad i=1,\dots, l
\end{equation}
remain small. Note that this is expected to happen at the beginning of training when the spectral norms of the layers are close to zero, while they tend to grow as the training progresses if no spectral regularization \cite{miyato2018spectral} is applied. Furthermore, note here that preventing the $A_i$'s from becoming too large during training may be an important consideration which informs the choice of model architecture and layer regularization.

{\bf Experimental evidence:}  From Equation \eqref{equation:resdidual_loss}, since the strength of IGR is a function of the learning rate, we should expect an increased pressure on the Dirichlet energy as a result of Equation \eqref{equation:gradient_relation} when training with higher learning rates. We verify this prediction for a ResNet-18 trained to classify CIFAR-10 images. Measuring the discrete Dirichlet energy at the time of maximal test accuracy for a range of learning rates,
we observe this predicted behaviour, consistent with our theory; see Figure \ref{figure:de_vs_lr}.

\begin{figure}[t]
  \centering
  \includegraphics[width=0.3\linewidth]{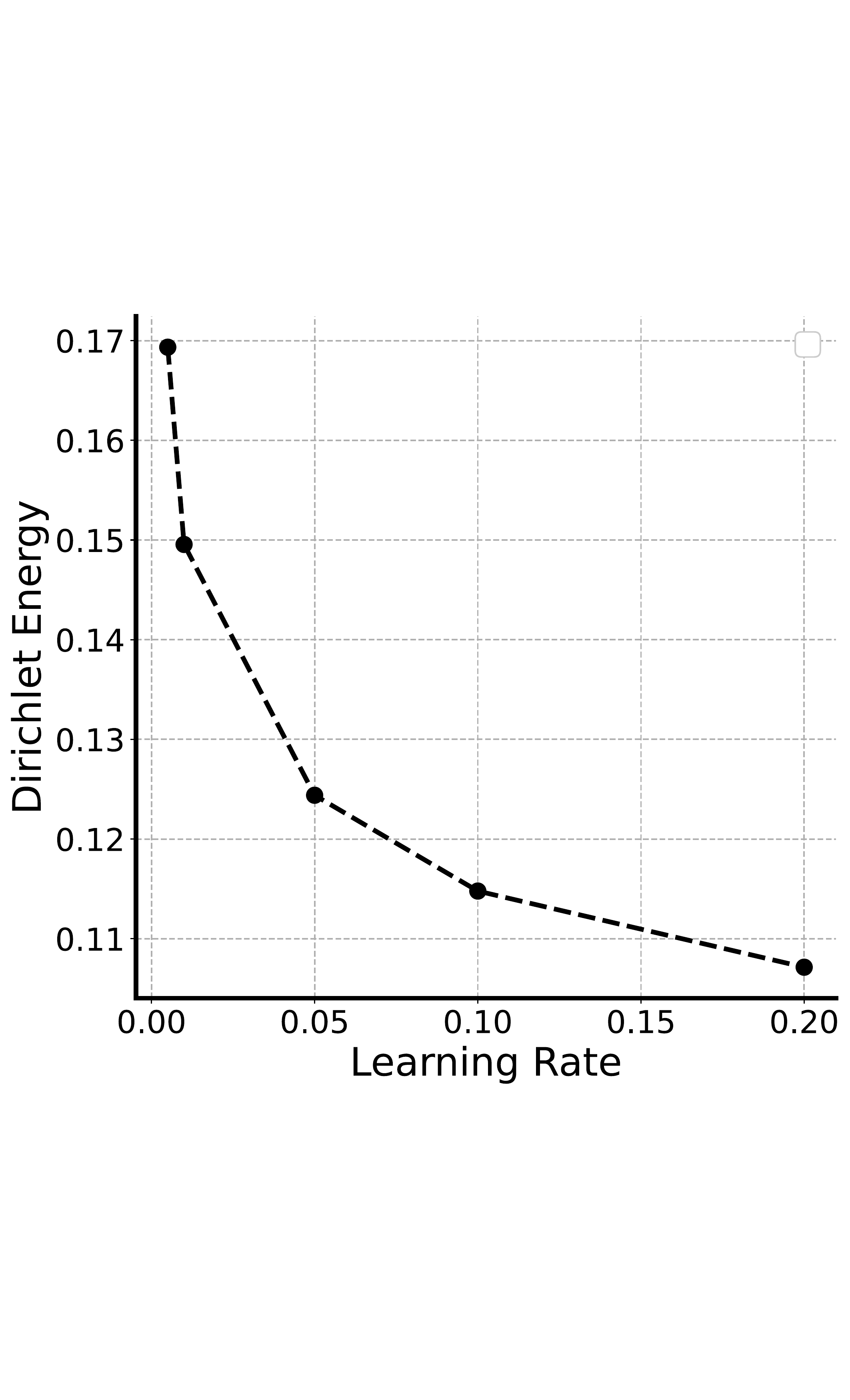}
  \includegraphics[width=0.3\linewidth]{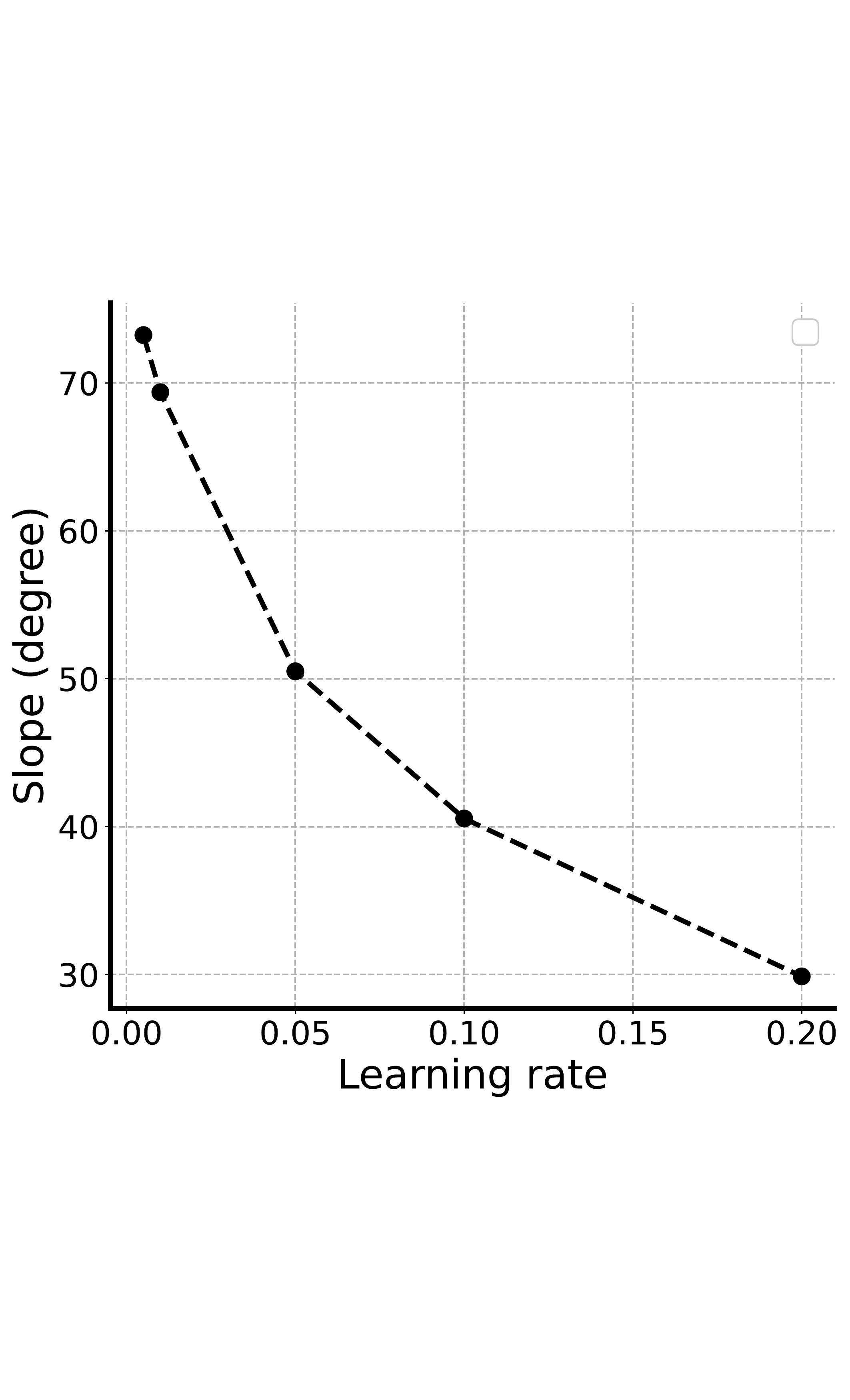}
  \includegraphics[width=0.3\linewidth]{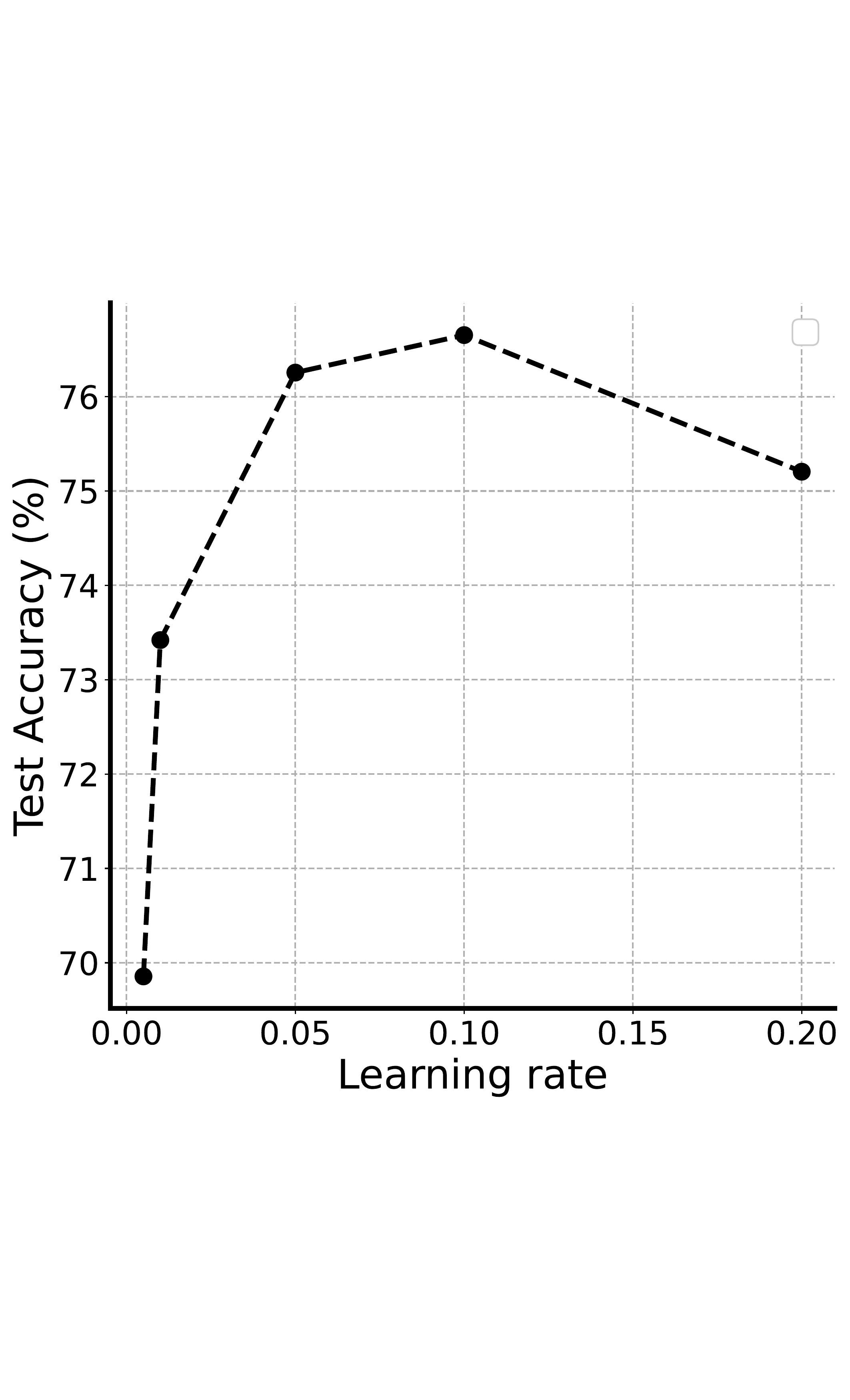}
  \vspace{-1.25cm}
  \caption{For a ResNet-18 trained on CIFAR-10, we observe that the Dirichlet energy and the loss-surface slope (essentially the IGR term $\|\nabla_\theta L\|^2$) decrease as the learning rate increases. Each dot corresponds to a fully trained model at time of maximal test accuracy.}
  \label{figure:de_vs_lr}
\end{figure}

\begin{remark}
Note also that for linear models (i.e., neural networks with a single linear layer), the Dirichlet energy of the network coincides in this case with the L2-norm of the parameters. Therefore, this results recovers the already known fact that linear models trained with SGD have an inductive bias towards low L2-norm solutions (see \cite{generalization}). This also points toward the fact that the Dirichlet energy may be the right generalization of the L2-norm for a general network. 
\end{remark}

\section{Related work, Future directions, and Discussion}
{\bf Splines and connections to harmonic function theory:} The Dirichlet energy \eqref{equation:DE} is well-known in harmonic function theory \citep{axler2013harmonic} where it can be shown using calculus of variations that harmonic functions subject to a boundary condition minimize the Dirichlet energy over the space of differentiable functions. This is known as Dirichlet's principle. The minimization of the Dirichlet energy itself is also related to the theory of splines \cite{jin2021implicit}. Our work seems to indicate that neural networks are biased towards (a notion of) harmonic functions with the dataset acting as the boundary condition. {\bf Complexity theory:} The notion of geometric complexity introduced has similarities to the Kolgomorov complexity \cite{Schmidhuber1997DiscoveringNN} as well as the minimum description length given in \cite{NIPS1993_9e3cfc48}. {\bf Smoothness regularization:} The notion of geometric complexity considered here is related to the notion of smoothness with respect to the input as discussed in \cite{pmlr-v137-rosca20a} as well as to the Sobolev regularization effect of SGD discussed in \cite{chao2021sobolev}, where inequalities similar to \eqref{equation:gradient_relation} but involving only the first layer are considered.
In particular, various forms of gradient penalties, reminiscent of the Dirichlet energy, have been devised to achieve Lipschitz smoothness \cite{NEURIPS2018_42998cf3, gulrajani2017improved, fedus2017many, NEURIPS2018_07f75d91, kodali2018on}. It has been shown that the discrete Dirichlet energy (evaluated at the data points) is a powerful regularizer \cite{hoffman2020robust, varga2018gradient} and in \cite{pmlr-v137-rosca20a} that it has advantages over other form of smoothness regularization (such as spectral norm regularization \cite{miyato2018spectral,zinan2020why}). Our analysis shows that we can control this form of regularization cheaply through the learning rate. In image processing, the Dirichlet energy is also called the Rudin–Osher–Fatemi total variation and it as been introduced as a powerful explicit regularizer for image denoising; see \cite{RUDIN1992259} and \cite{getreuer2012}. It may be possible that these various forms of smoothness regularization are useful because they provide implicit control over the model geometric complexity. {\bf Regularization through noise:}
The discrete Dirichlet energy is reminiscent of the Tikhonov regularizer which is implicitly regularized with added input noise \cite{bishop1995}. The modified loss in \eqref{equation:resdidual_loss} is also very reminiscent of the modified loss in \cite{blanc2020}, which is argued to be implicitly minimized by SGD when a random white noise is added to the labels. In Section 3 of \cite{pmlr-v80-mescheder18a}, it is argued that explicit gradient regularization with respect to input and noise instance produce similar types of regularization. 
Altogether, this suggests that feature noise, label noise, and the optimization scheme all conspire to implicitly tame the geometric complexity in the case of neural networks trained with gradient-based optimization schemes. {\bf Regularization through the number of layers:} In Equation \eqref{equation:gradient_relation}, one sees that each layer contributes an additional positive term, increasing the pressure on the Dirichlet energy. This suggests that the pressure on the model geometric complexity may increase with the neural network depth in a similar spirit as \cite{gao2016degree}. {\bf Training of GANs:} For GANs, explicit gradient regularization both with respect to the input \citep{gulrajani2017improved,NEURIPS2018_07f75d91,fedus2017many,kodali2018on,miyato2018spectral} and the parameters \cite{rosca2021discretization,odegan,balduzzi2018mechanics,mescheder2017numerics,nagarajan2017gradient} has been proven to be beneficial and related to smoothness. 
Our main theorem provides a way to relate gradient penalties with respect to the input and with respect to the parameters for neural networks, in a way where the spectral norm of the weight matrices plays a key role. This points toward geometric complexity being a useful notion to relate and understand these different forms of regularization (including spectral normalization as in \cite{miyato2018spectral} and \cite{zinan2020why}).

\section{Conclusion}
In conclusion, we have found that neural networks trained with SGD are subject to an implicit Geometric Occam's razor, which selects parameter configurations that have low geometric complexity ahead of configurations with high geometric complexity. This geometric complexity is given by the arc length in 1-dimensinal regression; is linearly related to the Dirichlet energy in higher-dimensional settings; and has many intriguing similarities to other known quantities, including various forms of implicit and explicit regularisation and model complexity.  More generally, our work develops promising new theoretical connections between optimization and the geometry of over-parameterised  neural networks.

\subsubsection*{Acknowledgments}

We would like to thank Mihaela Rosca, Maxim Neumann, Yan Wu, Samuel Smith, and Soham De for helpful discussion and feedback. We would also like to thank Patrick Cole and Shakir Mohamed for their support.

\newpage
\appendix

\section{Proof of Theorem 1} \label{appendix:proof}
Consider a neural network with $l$ layers
$
f_\theta(x) = f_1\circ \dots \circ f_l(x),
$
where $f_i(z) = a_i(w_i z + b_i)$ with $\theta = (w_1, b_1, \dots, w_l, b_l)$ being the vector of layer weight matrices $w_i$ and biases $b_i$ and the $a_i$'s are the layer activation functions. We will use the notation $f_{w_i}$ or $f_{b_i}$ instead of $f_\theta$ when we want to consider $f_\theta$ as dependent on the $w_i$'s or $b_i$'s only.

For this model structure and following Pythagoras, we have:
\begin{equation}\label{equation:decomposition}
\|\nabla_\theta f_\theta(x)\|^2 = 
\|\nabla_{w_1} f_{\theta}(x)\|^2 + \|\nabla_{b_1} f_{\theta}(x)\|^2 
+ \cdots + 
\|\nabla_{w_l} f_{\theta}(x)\|^2 + \|\nabla_{b_l} f_{\theta}(x)\|^2 
\end{equation}
For each layer $i$, we can rewrite the network function as $$f_{w_i}(x) = g_i(w_i h_i(x) + b_i),$$ where $g_i(z)$ consists of the deeper layers above $i$ and $h_i(x)$ consists of the shallower layers below $i$. The idea now, inspired from \cite{SeongLKHK18}, is to show that a small perturbation $\delta x$ of the input is equivalent to a small perturbation $u(\delta x)$ of the weights of layer $i$. We will use this idea to prove the following two lemmas.

\begin{lemma} In the notation above, for each layer $i$, we have:
\begin{equation}\label{equation:weight_derivative}
    \|\nabla_x f_\theta(x)\|^2 \left(\frac{\|h_i(x)\|}{\|w_i\|\|h_i'(x)\|}\right)^2 \leq \|\nabla_{w_i}f(x)\|^2.
\end{equation}
\end{lemma}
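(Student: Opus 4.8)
The plan is to run the input-perturbation / weight-perturbation correspondence flagged just before the statement: to first order a small input perturbation $\delta x$ is reproduced by a suitable rank-one perturbation $u$ of the weight matrix $w_i$, and then Cauchy--Schwarz converts a lower bound on $\|\nabla_{w_i} f\|$ into the claimed inequality. Fix a layer $i$ and use the factorization $f_\theta(x) = g_i\big(w_i h_i(x) + b_i\big)$, taking $f_\theta$ scalar-valued (the regression case; for vector outputs one argues componentwise). Differentiating through the composition, the directional derivative of $f_\theta$ at $x$ along a unit vector $\delta x$ is
$$
\langle \nabla_x f_\theta(x), \delta x\rangle = \nabla g_i(\xi)^\top\, w_i\, h_i'(x)\, \delta x ,
$$
where $\xi = w_i h_i(x) + b_i$ and $h_i'(x)$ is the Jacobian of the sub-network $h_i$; similarly, perturbing only the matrix $w_i$ in the direction $u$ gives $\langle \nabla_{w_i} f_\theta(x), u\rangle_F = \nabla g_i(\xi)^\top\, u\, h_i(x)$.

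The crucial step is then to choose, for the given $\delta x$, a weight perturbation $u$ with $u\, h_i(x) = w_i\, h_i'(x)\, \delta x$, so that the two directional derivatives above coincide. Among all such $u$ the minimum-Frobenius-norm choice is the rank-one matrix $u = (w_i h_i'(x)\delta x)\, h_i(x)^\top / \|h_i(x)\|^2$, which satisfies $\|u\|_F = \|w_i h_i'(x)\delta x\| / \|h_i(x)\| \le \|w_i\|\,\|h_i'(x)\| / \|h_i(x)\|$ by submultiplicativity of the operator norm together with $\|\delta x\| = 1$. Now take $\delta x$ in the direction of $\nabla_x f_\theta(x)$, so that $\langle \nabla_x f_\theta(x), \delta x\rangle = \|\nabla_x f_\theta(x)\|$, and apply Cauchy--Schwarz for the Frobenius inner product:
$$
\|\nabla_x f_\theta(x)\| = \langle \nabla_{w_i} f_\theta(x), u\rangle_F \le \|\nabla_{w_i} f_\theta(x)\|_F\, \|u\|_F \le \|\nabla_{w_i} f_\theta(x)\|_F\, \frac{\|w_i\|\,\|h_i'(x)\|}{\|h_i(x)\|}.
$$
Rearranging and squaring is exactly \eqref{equation:weight_derivative}; the degenerate cases $\nabla_x f_\theta(x) = 0$ and $h_i(x) = 0$ are immediate.

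The step to watch — and the only place where one could slip — is the norm bookkeeping: $\nabla_{w_i} f$ and $u$ are measured in the Frobenius norm, while $\|w_i\|$ and $\|h_i'(x)\|$ in the statement are spectral (operator) norms, and it is precisely the operator-norm inequality $\|w_i h_i'(x)\delta x\| \le \|w_i\|\,\|h_i'(x)\|\,\|\delta x\|$ that glues the two together; one also needs differentiability of $g_i$ at $\xi$ and of $h_i$ at $x$, which holds for smooth activations and almost everywhere for ReLU. Beyond this there is no genuine obstacle, since the conceptual content is exactly the correspondence supplied by the surrounding discussion. Finally, running the identical argument with a bias perturbation $v = w_i h_i'(x)\delta x$ in place of $u$ gives the companion estimate $\|\nabla_x f_\theta(x)\|^2 \le \|\nabla_{b_i} f_\theta(x)\|^2\,\|w_i\|^2\|h_i'(x)\|^2$; adding the two per-layer bounds and summing over $i$ through the Pythagorean decomposition \eqref{equation:decomposition} then yields Theorem~1.
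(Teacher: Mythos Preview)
Your argument is correct and follows essentially the same route as the paper: both construct the identical rank-one weight perturbation $u=(w_i h_i'(x)\delta x)\,h_i(x)^\top/\|h_i(x)\|^2$ so that the input- and weight-directional derivatives agree, and then pass to norms. The only cosmetic difference is that you phrase the last step via a specific unit $\delta x$ and Cauchy--Schwarz in the Frobenius inner product, whereas the paper writes the full linear relation $\nabla_x f_\theta(x)=\nabla_{w_i}f(x)\,\big((w_i h_i'(x))h_i(x)^\top/\|h_i(x)\|^2\big)$ and then takes norms; your version has the virtue of making the Frobenius/spectral norm bookkeeping explicit.
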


\begin{proof}
Consider a small perturbation $x + \delta x$ of the input $x$. We start by showing that we can always find a corresponding perturbation $w_i + u(\delta x)$ of the weight matrix in layer $i$ such that
\begin{equation}\label{equation:equivalence}
f_{w_i}(x + \delta x) = f_{w_i + u(\delta x)}(x).
\end{equation}
Namely, because $f_{w_i}(x) = g_i(w_i h_i(x) + b_i)$, to show this, it is enough to find $u(\delta x)$ such that
\begin{equation}\label{equation:condition}
w_i(h_i(x) + h_i'(x)\delta x ) + b_i = (w_i + u(\delta x)) h_i(x) + b_i,
\end{equation}
where we identify  $h_i(x + \delta x)$ with its linear approximation around $x$ for small $\delta x$. Then \eqref{equation:condition} is always satisfied if we set
\begin{equation}\label{equation:perturbation}
    u(\delta x) := \frac{(w_i h_i'(x) \delta x)h_i^T(x)}{\|h_i(x)\|^2},
\end{equation}
since $h_i^T(x)h_i(x) = \|h_i(x)\|^2$. Now taking the derivative with respect to $\delta x$ at $\delta x = 0$ on both sides of Equation \eqref{equation:equivalence}, and using the chain rule and that $u(\delta x)$ is linear in $\delta x$, we obtain a relation between the network derivative with respect to the weight matrices and w.r.t the input:
\begin{equation}
    \nabla_x f_\theta(x)  = \nabla_{w_i} f(x) \frac{(w_i h_i'(x))h^T(x)}{\|h_i(x)\|^2}.
\end{equation}
Taking the norm on both sides, squaring, and rearranging the terms yields \eqref{equation:weight_derivative}.
\end{proof}

Following the same strategy, we now prove a corresponding lemma for the biases at each layer:

\begin{lemma} In the notation above, for each layer $i$, we have:
\begin{equation}\label{equation:bias_derivative}
    \|\nabla_x f_\theta(x)\|^2 \left(\frac{1}{\|w_i\|\|h_i'(x)\|}\right)^2 \leq \|\nabla_{b_i}f(x)\|^2.
\end{equation}
\end{lemma}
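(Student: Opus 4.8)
The plan is to follow the proof of the previous lemma essentially line by line, swapping the role of the weight matrix $w_i$ for the bias vector $b_i$ in layer $i$. First I would fix a small input perturbation $x + \delta x$ and look for a matching bias perturbation $b_i + v(\delta x)$ such that $f_{b_i}(x + \delta x) = f_{b_i + v(\delta x)}(x)$. Writing $f_{b_i}(x) = g_i\big(w_i h_i(x) + b_i\big)$ and replacing $h_i(x + \delta x)$ by its first-order approximation $h_i(x) + h_i'(x)\delta x$, it suffices that $w_i\big(h_i(x) + h_i'(x)\delta x\big) + b_i = w_i h_i(x) + \big(b_i + v(\delta x)\big)$, which is solved immediately by $v(\delta x) := w_i h_i'(x)\,\delta x$. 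This is even simpler than the weight case in \eqref{equation:perturbation}: $v$ is linear in $\delta x$ from the outset and there is no normalization by $\|h_i(x)\|^2$, so in particular the argument goes through without any hypothesis on $h_i(x)$.

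Next I would differentiate both sides of $f_{b_i}(x + \delta x) = f_{b_i + v(\delta x)}(x)$ with respect to $\delta x$ at $\delta x = 0$. The chain rule gives $\nabla_x f_\theta(x)$ on the left, while on the right the linearity of $v$ (with derivative $w_i h_i'(x)$) gives
\begin{equation}
\nabla_x f_\theta(x) = \nabla_{b_i} f(x)\, w_i h_i'(x).
\end{equation}
Taking Euclidean norms and using submultiplicativity of the operator norm twice, $\|\nabla_{b_i} f(x)\, w_i h_i'(x)\| \le \|\nabla_{b_i} f(x)\|\,\|w_i\|\,\|h_i'(x)\|$ with $\|w_i\|$ the spectral norm, then squaring and dividing by $\|w_i\|^2\|h_i'(x)\|^2$, yields exactly \eqref{equation:bias_derivative}. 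Adding \eqref{equation:bias_derivative} to \eqref{equation:weight_derivative} and summing over the layers $i$ then recovers the bracketed sum in \eqref{equation:gradient_relation} via the Pythagorean decomposition \eqref{equation:decomposition}.

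As in the weight lemma, the real content is not the algebra but the \emph{linearization step}: the equality $f_{b_i}(x + \delta x) = f_{b_i + v(\delta x)}(x)$ holds only to \emph{first order} in $\delta x$, so one must justify that first-order agreement of these two curves through $f_\theta(x)$ forces their $\delta x$-derivatives at $0$ to agree, i.e. that the $o(\|\delta x\|)$ remainder is inert under differentiation at the base point. The other things to be careful about are dimension bookkeeping — so that $\nabla_{b_i} f(x)$ genuinely acts as a row vector on the $d_i \times d$ matrix $w_i h_i'(x)$ — and the direction of the inequality, which enters precisely through the two operator-norm bounds and not through the exact gradient identity.
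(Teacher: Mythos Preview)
Your proposal is correct and follows essentially the same route as the paper: define the bias perturbation $v(\delta x) = w_i h_i'(x)\,\delta x$ so that the two curves agree to first order, differentiate at $\delta x = 0$ to obtain $\nabla_x f_\theta(x) = \nabla_{b_i} f(x)\, w_i h_i'(x)$, and then pass to norms using submultiplicativity. Your write-up is in fact slightly more explicit than the paper's about the operator-norm step and the role of the linearization, and you correctly write $\nabla_{b_i}$ where the paper's displayed identity has a typographical $\nabla_{w_i}$.
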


\begin{proof}
Consider a small perturbation $x + \delta x$ of the input $x$. We start again by showing that we can always find a corresponding perturbation $b_i + u(\delta x)$ of the biases in layer $i$ such that
\begin{equation}\label{equation:equivalence}
f_{b_i}(x + \delta x) = f_{b_i + u(\delta x)}(x).
\end{equation}
Namely, because $f_{b_i}(x) = g_i(w_i h_i(x) + b_i)$, to show this, it is enough to find $u(\delta x)$ such that
\begin{equation}\label{equation:condition2}
w_i(h_i(x) + h_i'(x)\delta x ) + b_i = w_i h_i(x) + b_i + u(\delta x),
\end{equation}
where we again identify $h_i(x + \delta x)$ with its linear approximation around $x$ for small $\delta x$. Then \eqref{equation:condition2} is always satisfied if we set this time
\begin{equation}\label{equation:perturbation}
    u(\delta x) := w_i h_i'(x) \delta x.
\end{equation}
Now taking the derivative with respect to $\delta x$ at $\delta x = 0$ on both sides of Equation \eqref{equation:equivalence}, and using the chain rule and that $u(\delta x)$ is linear in $\delta x$, we obtain a relation between the network derivative w.r.t. the biases and w.r.t the input:
\begin{equation}
    \nabla_x f_\theta(x)  = \nabla_{w_i} f(x) w_i h_i'(x).
\end{equation}
Taking the norm on both sides, squaring, and rearranging the terms yields \eqref{equation:bias_derivative}.
\end{proof}

Using this in quality for each layer in the decomposition given by Equation \eqref{equation:decomposition}, we obtain that
\begin{equation}
    \|\nabla_x f_\theta(x)\|^2 \left(\left(\frac{k_1(x)}{\|w_1\|}\right)^2 
       + \cdots + 
    \left(\frac{k_l(x)}{\|w_l\|}\right)^2\right) \leq \|\nabla_{\theta}f(x)\|^2
\end{equation}

Now if we use \eqref{equation:weight_derivative} and \eqref{equation:bias_derivative} in \eqref{equation:decomposition}, we finally obtain that
\begin{equation}
    \|\nabla_x f_\theta(x)\|^2 \left(
        \frac{1 + \|h_1(x)\|^2}{\|w_1\|^2\|h_1'(x)\|^2}
            + \cdots + 
        \frac{1 + \|h_l(x)\|^2}{\|w_l\|^2\|h_l'(x)\|^2}
    \right) \leq \|\nabla_{\theta}f(x)\|^2.
\end{equation}

\begin{thebibliography}{34}
\providecommand{\natexlab}[1]{#1}
\providecommand{\url}[1]{\texttt{#1}}
\expandafter\ifx\csname urlstyle\endcsname\relax
  \providecommand{\doi}[1]{doi: #1}\else
  \providecommand{\doi}{doi: \begingroup \urlstyle{rm}\Url}\fi

\bibitem[Arbel et~al.(2018)Arbel, Sutherland, Bi\'{n}kowski, and
  Gretton]{NEURIPS2018_07f75d91}
Michael Arbel, Danica~J Sutherland, Mikolaj Binkowski, and Arthur Gretton.
\newblock On gradient regularizers for mmd gans.
\newblock In \emph{Advances in Neural Information Processing Systems},
  volume~31, 2018.

\bibitem[Arora et~al.(2018)Arora, Basu, Mianjy, and
  Mukherjee]{arora2018understanding}
Raman Arora, Amitabh Basu, Poorya Mianjy, and Anirbit Mukherjee.
\newblock Understanding deep neural networks with rectified linear units.
\newblock In \emph{International Conference on Learning Representations}, 2018.

\bibitem[Axler et~al.(2013)Axler, Bourdon, and Wade]{axler2013harmonic}
Sheldon Axler, Paul Bourdon, and Ramey Wade.
\newblock \emph{Harmonic Function Theory}, volume 137.
\newblock Springer, 2013.

\bibitem[Balduzzi et~al.(2018)Balduzzi, Racaniere, Martens, Foerster, Tuyls,
  and Graepel]{balduzzi2018mechanics}
David Balduzzi, Sebastien Racaniere, James Martens, Jakob Foerster, Karl Tuyls,
  and Thore Graepel.
\newblock The mechanics of n-player differentiable games.
\newblock In \emph{International Conference on Machine Learning}, 2018.

\bibitem[Barrett and Dherin(2021)]{barrett2021implicit}
David~G.T. Barrett and Benoit Dherin.
\newblock Implicit gradient regularization.
\newblock In \emph{International Conference on Learning Representations}, 2021.

\bibitem[Belkin(2021)]{belkin2021fear}
Mikhail Belkin.
\newblock Fit without fear: remarkable mathematical phenomena of deep learning
  through the prism of interpolation.
\newblock 2021.
\newblock URL \url{https://arxiv.org/abs/2105.14368}.

\bibitem[Bishop(1995)]{bishop1995}
Chris~M. Bishop.
\newblock Training with noise is equivalent to tikhonov regularization.
\newblock \emph{Neural Computation}, 7\penalty0 (1), 1995.

\bibitem[Blanc et~al.(2020)Blanc, Gupta, Valiant, and Valiant]{blanc2020}
Guy Blanc, Neha Gupta, Gregory Valiant, and Paul Valiant.
\newblock Implicit regularization for deep neural networks driven by an
  ornstein-uhlenbeck like process.
\newblock In \emph{Conference on Learning Theory}, 2020.

\bibitem[do~Carmo(1976)]{differential1976carmo}
Manfredo~P. do~Carmo.
\newblock \emph{Differential geometry of curves and surfaces.}
\newblock Prentice Hall, 1976.

\bibitem[Elsayed et~al.(2018)Elsayed, Krishnan, Mobahi, Regan, and
  Bengio]{NEURIPS2018_42998cf3}
Gamaleldin Elsayed, Dilip Krishnan, Hossein Mobahi, Kevin Regan, and Samy
  Bengio.
\newblock Large margin deep networks for classification.
\newblock In \emph{Advances in Neural Information Processing Systems},
  volume~31, 2018.

\bibitem[Fedus et~al.(2018)Fedus, Rosca, Lakshminarayanan, Dai, Mohamed, and
  Goodfellow]{fedus2017many}
William Fedus, Mihaela Rosca, Balaji Lakshminarayanan, Andrew~M. Dai, Shakir
  Mohamed, and Ian~J. Goodfellow.
\newblock Many paths to equilibrium: Gans do not need to decrease a divergence
  at every step.
\newblock In \emph{International Conference on Learning Representations}, 2018.

\bibitem[Gao and Jojic(2016)]{gao2016degree}
Tianxiang Gao and Vladimir Jojic.
\newblock Degrees of freedom in deep neural networks.
\newblock In \emph{Proceedings of the Thirty-Second Conference on Uncertainty
  in Artificial Intelligence}, 2016.

\bibitem[Getreuer(2012)]{getreuer2012}
Pascal Getreuer.
\newblock {Rudin-Osher-Fatemi Total Variation Denoising using Split Bregman}.
\newblock \emph{Image Processing On Line}, 2012.

\bibitem[Gulrajani et~al.(2017)Gulrajani, Ahmed, Arjovsky, Dumoulin, and
  Courville]{gulrajani2017improved}
Ishaan Gulrajani, Faruk Ahmed, Martin Arjovsky, Vincent Dumoulin, and Aaron~C
  Courville.
\newblock Improved training of wasserstein gans.
\newblock In \emph{Advances in neural information processing systems}, 2017.

\bibitem[Hinton and Zemel(1994)]{NIPS1993_9e3cfc48}
Geoffrey~E Hinton and Richard Zemel.
\newblock Autoencoders, minimum description length and helmholtz free energy.
\newblock In J.~Cowan, G.~Tesauro, and J.~Alspector, editors, \emph{Advances in
  Neural Information Processing Systems}, volume~6. Morgan-Kaufmann, 1994.

\bibitem[Hoffman et~al.(2020)Hoffman, Roberts, and Yaida]{hoffman2020robust}
Judy Hoffman, Daniel~A. Roberts, and Sho Yaida.
\newblock Robust learning with jacobian regularization.
\newblock 2020.
\newblock URL \url{https://arxiv.org/abs/1908.02729}.

\bibitem[Jin and Montufar(2021)]{jin2021implicit}
Hui Jin and Guido Montufar.
\newblock Implicit bias of gradient descent for mean squared error regression
  with wide neural networks.
\newblock 2021.
\newblock URL \url{https://arxiv.org/abs/2006.07356}.

\bibitem[Kodali et~al.(2018)Kodali, Hays, Abernethy, and Kira]{kodali2018on}
Naveen Kodali, James Hays, Jacob Abernethy, and Zsolt Kira.
\newblock On convergence and stability of {GAN}s.
\newblock In \emph{Advances in Neural Information Processing Systems}, 2018.

\bibitem[Lin et~al.(2021)Lin, Sekar, and Fanti]{zinan2020why}
Zinan Lin, Vyas Sekar, and Giulia~C. Fanti.
\newblock Why spectral normalization stabilizes gans: Analysis and
  improvements.
\newblock In \emph{Advances in neural information processing systems}, 2021.

\bibitem[Ma and Ying(2021)]{chao2021sobolev}
Chao Ma and Lexing Ying.
\newblock The sobolev regularization effect of stochastic gradient descent.
\newblock 2021.
\newblock URL \url{https://arxiv.org/abs/2105.13462}.

\bibitem[Mescheder et~al.(2017)Mescheder, Nowozin, and
  Geiger]{mescheder2017numerics}
Lars Mescheder, Sebastian Nowozin, and Andreas Geiger.
\newblock The numerics of gans.
\newblock In \emph{Advances in Neural Information Processing Systems}, 2017.

\bibitem[Mescheder et~al.(2018)Mescheder, Geiger, and
  Nowozin]{pmlr-v80-mescheder18a}
Lars Mescheder, Andreas Geiger, and Sebastian Nowozin.
\newblock Which training methods for {GAN}s do actually converge?
\newblock In \emph{Proceedings of the 35th International Conference on Machine
  Learning}, 2018.

\bibitem[Miyato et~al.(2018)Miyato, Kataoka, Koyama, and
  Yoshida]{miyato2018spectral}
Takeru Miyato, Toshiki Kataoka, Masanori Koyama, and Yuichi Yoshida.
\newblock Spectral normalization for generative adversarial networks.
\newblock In \emph{International Conference on Learning Representations}, 2018.

\bibitem[Nagarajan and Kolter(2017)]{nagarajan2017gradient}
Vaishnavh Nagarajan and J~Zico Kolter.
\newblock Gradient descent gan optimization is locally stable.
\newblock In \emph{Advances in neural information processing systems}, 2017.

\bibitem[Nakkiran et~al.(2020)Nakkiran, Kaplun, Bansal, Yang, Barak, and
  Sutskever]{Nakkiran2020Deep}
Preetum Nakkiran, Gal Kaplun, Yamini Bansal, Tristan Yang, Boaz Barak, and Ilya
  Sutskever.
\newblock Deep double descent: Where bigger models and more data hurt.
\newblock In \emph{International Conference on Learning Representations}, 2020.

\bibitem[Qin et~al.(2020)Qin, Wu, Springenberg, Brock, Donahue, Lillicrap, and
  Kohli]{odegan}
Chongli Qin, Yan Wu, Jost~Tobias Springenberg, Andy Brock, Jeff Donahue,
  Timothy Lillicrap, and Pushmeet Kohli.
\newblock Training generative adversarial networks by solving ordinary
  differential equations.
\newblock In \emph{Advances in Neural Information Processing Systems},
  volume~33, 2020.

\bibitem[Rosca et~al.(2020)Rosca, Weber, Gretton, and
  Mohamed]{pmlr-v137-rosca20a}
Mihaela Rosca, Theophane Weber, Arthur Gretton, and Shakir Mohamed.
\newblock A case for new neural network smoothness constraints.
\newblock In \emph{Proceedings on "I Can't Believe It's Not Better!" at NeurIPS
  Workshops}, volume 137 of \emph{Proceedings of Machine Learning Research},
  2020.

\bibitem[Rosca et~al.(2021)Rosca, Wu, Dherin, and
  Barrett]{rosca2021discretization}
Mihaela Rosca, Yan Wu, Benoit Dherin, and David~G.T. Barrett.
\newblock Discretization drift in two-player games.
\newblock In \emph{Proceedings of the 38th International Conference on Machine
  Learning}, volume 139, 2021.

\bibitem[Rudin et~al.(1992)Rudin, Osher, and Fatemi]{RUDIN1992259}
Leonid Rudin, Stanley Osher, and Emad Fatemi.
\newblock Nonlinear total variation based noise removal algorithms.
\newblock \emph{Physica D: Nonlinear Phenomena}, 60\penalty0 (1), 1992.

\bibitem[Schmidhuber(1997)]{Schmidhuber1997DiscoveringNN}
J.~Schmidhuber.
\newblock Discovering neural nets with low kolmogorov complexity and high
  generalization capability.
\newblock \emph{Neural networks : the official journal of the International
  Neural Network Society}, 10 5, 1997.

\bibitem[Seong et~al.(2018)Seong, Lee, Kee, Han, and Kim]{SeongLKHK18}
Sihyeon Seong, Yegang Lee, Youngwook Kee, Dongyoon Han, and Junmo Kim.
\newblock Towards flatter loss surface via nonmonotonic learning rate
  scheduling.
\newblock In \emph{Proceedings of the Thirty-Fourth Conference on Uncertainty
  in Artificial Intelligence}, 2018.

\bibitem[Smith et~al.(2021)Smith, Dherin, Barrett, and De]{smith2021on}
Samuel~L Smith, Benoit Dherin, David~G.T. Barrett, and Soham De.
\newblock On the origin of implicit regularization in stochastic gradient
  descent.
\newblock In \emph{International Conference on Learning Representations}, 2021.

\bibitem[Varga et~al.(2018)Varga, Csiszárik, and Zombori]{varga2018gradient}
Dániel Varga, Adrián Csiszárik, and Zsolt Zombori.
\newblock Gradient regularization improves accuracy of discriminative models.
\newblock 2018.
\newblock URL \url{https://arxiv.org/abs/1712.09936}.

\bibitem[Zhang et~al.(2017)Zhang, Bengio, Hardt, Recht, and
  Vinyals]{generalization}
Chiyuan Zhang, Samy Bengio, Moritz Hardt, Benjamin Recht, and Oriol Vinyals.
\newblock Understanding deep learning requires rethinking generalization.
\newblock In \emph{International Conference on Learning Representations}, 2017.

\end{thebibliography}
\end{document}